\setlist{noitemsep} 
\setlist{nosep} 
\newcommand{\zerodisplayskips}{%
  \setlength{\abovedisplayskip}{0pt}%
  \setlength{\belowdisplayskip}{0pt}%
  \setlength{\abovedisplayshortskip}{0pt}%
  \setlength{\belowdisplayshortskip}{0pt}}
\appto{\normalsize}{\zerodisplayskips}
\appto{\small}{\zerodisplayskips}
\appto{\footnotesize}{\zerodisplayskips}
\newcommand\scale[2]{\vstretch{#1}{\hstretch{#1}{#2}}}
\newcommand{\LIPplus}{\if@draft
		\mathbin{\ooalign{$\bigtriangleup$\crcr\hidewidth \raise.14em\hbox{$\scale{0.7}{\scriptscriptstyle+}$}\hidewidth}}
	\else
		\mathbin{\mathpalette\LIPcls+}
	\fi}
\newcommand{\LIPminus}{\if@draft
		\mathbin{\ooalign{$\bigtriangleup$\crcr\hidewidth \raise.14em\hbox{$\scale{0.7}{\scriptscriptstyle-}$}\hidewidth}}
	\else
		\mathbin{\mathpalette\LIPcls-}
	\fi}
\newcommand{\LIPtimes}{\if@draft
	  \mathbin{\ooalign{$\bigtriangleup$\crcr\hidewidth \raise.14em\hbox{$\scale{0.7}{\scriptscriptstyle\times}$}\hidewidth}}
	\else
		\mathbin{\mathpalette\LIPcls\times}
	\fi}
\newcommand{\LIPcls}[2]{%
  \ooalign{$#1\bigtriangleup$\crcr  \hidewidth\raisefix{#1}\hbox{$#1\scale{0.45}{\bm{#2}}$}\hidewidth}}
\def\raisefix#1{%
  \ifx#1\displaystyle
    \raise.14em
  \else
    \ifx#1\textstyle
      \raise.14em
    \else
      \ifx#1\scriptstyle
        \raise.112em
      \else
        \raise.0933em
      \fi
    \fi
  \fi
}
\newcommand{\LP}{\LIPplus}
\newcommand{\LM}{\LIPminus}
\newcommand{\LT}{\LIPtimes}
\newcommand{\Real}{\mathbb R}
\newcommand{\Realb}{\overline{\Real}}
\newcommand{\la}{\lambda}
\newcommand{\I}{\mathcal{I}}
\newcommand{\Ib}{\overline{\I}}
\newcommand{\Fcurv}{\mathcal{F}}
\newcommand{\Fcurvb}{\overline{\Fcurv}}
\newcommand{\Lscr}{\mathscr{L}}
\begin{document}
\title{\MakeUppercase{Logarithmic Mathematical Morphology: a new framework adaptive to illumination changes}}
\titlerunning{Logarithmic Mathematical Morphology}
%
\author{Guillaume Noyel\inst{1,2}\orcidID{0000-0002-7374-548X}}
\authorrunning{G. Noyel}
%
\institute{University of Strathclyde Institute of Global Public Health, Ecully, France \and
International Prevention Research Institute, iPRI, Lyon, France}
\maketitle              
\begin{abstract}
A new set of mathematical morphology (MM) operators adaptive to illumination changes caused by variation of exposure time or light intensity is defined thanks to the Logarithmic Image Processing (LIP) model. This model based on the physics of acquisition is consistent with human vision. The fundamental operators, the logarithmic-dilation and the logarithmic-erosion, are defined with the LIP-addition of a structuring function. The combination of these two adjunct operators gives morphological filters, namely the logarithmic-opening and closing, useful for pattern recognition. The mathematical relation existing between ``classical'' dilation and erosion and their logarithmic-versions is established facilitating their implementation. Results on simulated and real images show that logarithmic-MM is more efficient on low-contrasted information than ``classical'' MM. 

\keywords{Mathematical Morphology \and contrast variations \and illumination changes \and Logarithmic Image Processing \and pattern recognition.}
\end{abstract}

%
%

\section{Introduction}
\label{sec:intro}

Images are functions whose values are bounded between 0 and $M$ (e.g. 256 for 8 bits images) and depend of the illumination conditions. During the acquisition, some parts of the image may be underexposed to light with dark values close to 0, whereas other parts may be overexposed to light with bright values close to $M$. Therefore, the processing should not be the same in the bright and in the dark parts. When processing grey-level images by Mathematical Morphology (MM) \cite{Matheron1967,Serra1982,Heijmans1994}, dark parts and bright parts are processed in the same way and in some cases the transformed image may have values that exceed the upper limit $M$.

The aim of this paper is to overcome this issue by defining morphological operators adaptive to lighting variations thanks to an appropriate model, the Logarithmic Image Processing (LIP) one \cite{Jourlin2001,Jourlin2016} which allows to brighten or darken images in a way compatible with the physics of acquisition and with the human visual system \cite{Brailean1991}.  
Previously, a morphological transform, the LIP-top-hat, was defined with the LIP model in \cite{Jourlin2001}. A model, the Symmetric Logarithmic Image Processing (SLIP) one \cite{Navarro2013} was combined with wavelets in \cite{Navarro2014}. Homomorphic models \cite{Oppenheim1968,Navarro2013} and retinex algorithms \cite{Meylan2006} were also used with convolution.
However, these models, interesting from a mathematical point of view, are not related to the physics of acquisition.

This paper constitutes the first attempt to define morphological operators adaptive to lighting variations - without any pre-processing. Such a property makes the definition of Logarithmic-Mathematical Morphology of the utmost importance for many applications where the acquisition depends on the illumination (e.g. industry, outdoor scenes, forensics, medical images etc.) \cite{Sugimura2015,Lai2017,Noyel2017c,Peng2017}. The paper is organised as follows: i) after a reminder about MM and the LIP model, ii) MM will be defined in the logarithmic-additive framework. The morphological properties of the operators will be verified and iii) the Logarithmic-MM will be illustrated and compared to the classical MM.

%
%

\section{Prerequisites}
\label{sec:pre}


\subsection{Mathematical Morphology}
\label{ssec:pre:MM}

\begin{definition}{\textbf{Complete lattice.}}
Given a set $\Lscr$ and a partial order $\leq$ on $\Lscr$, $\Lscr$ is a complete lattice if every subset $\mathscr{X}$ of $\Lscr$ has an infimum (a greatest lower bound), $\wedge \mathscr{X}$, and a supremum (a least upper bound), $\vee \mathscr{X}$.
\end{definition}
MM \cite{Matheron1967,Serra1982,Najman2013} is defined on complete lattices \cite{Heijmans1994,Banon1993}.
The least element $O$ and the greatest element $I$ are two important elements of the lattice $\Lscr$.
A grey-level image is a function $f$: $D \subset \Real^n \rightarrow [O,M[$, with $M \in \Real$. The space of images is denoted $\I = [O,M[^D$.
The (bounded) set of images $\Ib = [0,M]^D$ and the set of functions $\Realb^D$, $\Realb = \Real \cup \{-\infty , +\infty\}$ are complete lattices with the order $\leq$. For $\Ib$, the least and greatest elements are the constant functions equal to zero, $f_0$, and $M$, $f_M$. The supremum and infimum are respectively, for any $\mathscr{X} \subset \overline{\I}$: $\left(\wedge_{\overline{\I}} \mathscr{X}\right)(x) = \wedge_{[0,M]} \left\{ f(x):f \in \mathscr{X}, \> x\in D \right\}$ and $\left(\vee_{\overline{\I}} \mathscr{X}\right)(x) = \vee_{[0,M]} \left\{ f(x):f \in \mathscr{X}, \> x\in D  \right\}$.

\begin{definition}{\textbf{Erosion, dilation} \cite{Serra1988,Banon1993}\textbf{.}}
Given $\Lscr_1$ and $\Lscr_2$ two complete lattices, a mapping $\psi \in \Lscr_2^{\Lscr_1}$ is: i) an erosion $\varepsilon$: iff $\forall \mathscr{X} \subset \Lscr_1$, $\psi( \wedge \mathscr{X} ) = \wedge \psi( \mathscr{X} )$ or ii) a dilation $\delta$: iff $\forall \mathscr{X} \subset \Lscr_1$, $\psi( \vee \mathscr{X} ) = \vee \psi( \mathscr{X} )$.
\label{pre:def_morpho_base}
\end{definition}
As the definitions of these mappings apply even to the empty subset of $\Lscr_1$, we have: $\varepsilon(I)=I$ and $\delta(O)=O$.

\begin{definition}{\textbf{Adjunction} \cite{Heijmans1994}\textbf{.}}
Let $\varepsilon \in \Lscr_2^{\Lscr_1}$ and $\delta \in \Lscr_1^{\Lscr_2}$ be operators between complete lattices $\Lscr_1$ and $\Lscr_2$; the pair $(\varepsilon,\delta)$ is called an adjunction between $\Lscr_1$ and $\Lscr_2$ if for all $X \in \Lscr_1$, $Y \in \Lscr_2$ there is $\delta(Y) \leq X \Leftrightarrow Y \leq \varepsilon(X)$.
\label{pre:def_adjunction}
\end{definition}

\begin{proposition}
In an adjunction $(\varepsilon,\delta)$, $\varepsilon$ is an erosion and $\delta$ a dilation \cite{Heijmans1994}.
\end{proposition}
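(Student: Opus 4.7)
The plan is to prove both claims symmetrically by exploiting the defining equivalence of the adjunction, using the universal properties of $\wedge$ and $\vee$. I would first establish a short lemma that both $\varepsilon$ and $\delta$ are monotone (isotone), since this gives the easy halves of the two preservation identities. For monotonicity of $\delta$, suppose $Y_1 \leq Y_2$ in $\Lscr_2$. Setting $X := \delta(Y_2)$, the right-hand side $\delta(Y_2) \leq X$ holds trivially, hence by the adjunction $Y_2 \leq \varepsilon(X)$, so $Y_1 \leq \varepsilon(X)$, and by the adjunction again $\delta(Y_1) \leq X = \delta(Y_2)$. Monotonicity of $\varepsilon$ is obtained dually.

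Next I would prove that $\delta$ is a dilation. Fix an arbitrary $\mathscr{Y} \subset \Lscr_2$. The inequality $\vee \delta(\mathscr{Y}) \leq \delta(\vee \mathscr{Y})$ is immediate: for every $Y \in \mathscr{Y}$, $Y \leq \vee \mathscr{Y}$, so by monotonicity $\delta(Y) \leq \delta(\vee \mathscr{Y})$, and taking the supremum gives the claim. For the reverse inequality, set $Z := \vee \delta(\mathscr{Y}) \in \Lscr_1$. Then for each $Y \in \mathscr{Y}$ one has $\delta(Y) \leq Z$, which by the adjunction is equivalent to $Y \leq \varepsilon(Z)$; taking the supremum over $\mathscr{Y}$ yields $\vee \mathscr{Y} \leq \varepsilon(Z)$, and applying the adjunction once more gives $\delta(\vee \mathscr{Y}) \leq Z$, as desired.

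The proof that $\varepsilon$ is an erosion is then entirely analogous, simply reversing the roles of $\leq$ and the two sides of the adjunction: for $\mathscr{X} \subset \Lscr_1$, monotonicity of $\varepsilon$ gives $\varepsilon(\wedge \mathscr{X}) \leq \wedge \varepsilon(\mathscr{X})$, and setting $W := \varepsilon(\wedge \mathscr{X})$ and using $\delta(W) \leq \wedge \mathscr{X}$ (obtained from $W \leq \varepsilon(\wedge \mathscr{X})$ via the adjunction and the definition of infimum) leads to $\delta(W) \leq X$ for each $X \in \mathscr{X}$, hence $W \leq \varepsilon(X)$ for each such $X$, and therefore $W \leq \wedge \varepsilon(\mathscr{X})$.

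I do not anticipate a genuine obstacle: the entire argument is a bookkeeping exercise that alternates between applying the adjunction equivalence and the universal properties of $\vee$/$\wedge$. The one point that requires a little care is handling the empty subset, so that the identities $\varepsilon(I)=I$ and $\delta(O)=O$ remain consistent with taking $\mathscr{X}=\emptyset$ (for which $\wedge \emptyset = I$ and $\vee \emptyset = O$ in a complete lattice); the argument above goes through unchanged in that case, since the supremum or infimum over an empty index set is the least or greatest element, and the adjunction still applies.
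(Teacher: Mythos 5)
The paper itself does not prove this proposition; it is stated with a citation to Heijmans, so there is no internal proof to compare against. Your overall strategy is the standard one and the first two thirds are correct: the monotonicity lemma is right, and the proof that $\delta(\vee\mathscr{Y}) = \vee\delta(\mathscr{Y})$ is complete and correct, including the handling of $\mathscr{Y}=\emptyset$ (which recovers $\delta(O)=O$).

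However, the erosion half contains a dualization slip that leaves it incomplete. After correctly obtaining the easy inequality $\varepsilon(\wedge\mathscr{X}) \leq \wedge\varepsilon(\mathscr{X})$ from monotonicity, you must prove the reverse inequality $\wedge\varepsilon(\mathscr{X}) \leq \varepsilon(\wedge\mathscr{X})$. But you set $W := \varepsilon(\wedge\mathscr{X})$, and the chain you then run --- $W \leq \varepsilon(\wedge\mathscr{X})$, hence $\delta(W) \leq \wedge\mathscr{X} \leq X$, hence $W \leq \varepsilon(X)$, hence $W \leq \wedge\varepsilon(\mathscr{X})$ --- concludes $\varepsilon(\wedge\mathscr{X}) \leq \wedge\varepsilon(\mathscr{X})$, which is the \emph{same} inequality you already had. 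The correct dual of your $\delta$-argument takes $W := \wedge\varepsilon(\mathscr{X})$: then $W \leq \varepsilon(X)$ for every $X \in \mathscr{X}$, so by the adjunction $\delta(W) \leq X$ for every $X$, hence $\delta(W) \leq \wedge\mathscr{X}$, and applying the adjunction once more gives $W \leq \varepsilon(\wedge\mathscr{X})$, which is the missing direction. With that one substitution the proof is complete and fully standard.
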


When using an additive structuring function $b \in [O,M]^{D_b}$, $D_b \subset D$, invariant under translation (in $D$), the previously defined dilation $\delta$ and erosion $\varepsilon$ in the  lattice $(\Realb^D,\leq)$, can be expressed as \cite{Serra1988,Heijmans1994}:
\begin{eqnarray}
\delta_b(f)(x)		 =\vee 	\left\{ f(x - h) + b(h), h \in D_b\right\} = (f \oplus b) (x) \label{eq:dilate_funct}\\
\varepsilon_b(f)(x)=\wedge \left\{ f(x + h) - b(h), h \in D_b\right\} = (f \ominus b) (x). \label{eq:erode_funct}%
\end{eqnarray}
$\oplus$ and $\ominus$ are the extension to functions of Minkowski operations on sets \cite{Serra1982}.

\begin{definition}{\textbf{Opening, closing} \cite{Serra1982,Heijmans1994}\textbf{.}}
An operator $\psi \in \Lscr^{\Lscr}$ on the complete lattice $\Lscr$ is called:
\begin{itemize}[wide, labelwidth=0pt, labelindent=0pt]
	\item an opening if $\psi$ is increasing ($\forall X, Y \in \Lscr$, if $X \leq Y$ then $\psi(X) \leq \psi(Y)$), anti-extensive ($\forall X \in \Lscr$, $\psi(X) \leq X$) and idempotent ($\psi \circ \psi = \psi$),
	\item a closing if $\psi$ is increasing, extensive ($\forall X \in \Lscr$, $X \leq \psi(X)$) and idempotent.
\end{itemize}
\label{pre:def_opening_closing}
\end{definition}

\begin{proposition}
Let $(\varepsilon,\delta) \in \Lscr_2^{\Lscr_1} \times \Lscr_1^{\Lscr_2}$ be an adjunction between $\Lscr_1$ and $\Lscr_2$, then $\delta \varepsilon$ is an opening on $\Lscr_1$ and $\varepsilon \delta$ is a closing on $\Lscr_2$ \cite{Heijmans1994}.
\end{proposition}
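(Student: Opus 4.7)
The plan is to verify the three defining properties of an opening for $\delta\varepsilon$ on $\Lscr_1$, and dually for $\varepsilon\delta$ on $\Lscr_2$. Since the arguments swap the roles of $\Lscr_1,\Lscr_2$ and of $\leq$ in the appropriate direction, I would write out the opening case in full and only indicate the dual.

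First I would establish monotonicity. From Definition of adjunction, if $X \leq X'$ in $\Lscr_1$ then, starting from $\varepsilon(X') \leq \varepsilon(X')$, we get $\delta\varepsilon(X') \leq X'$, but this is not quite the statement I want; the cleanest way is to observe that $\varepsilon$, being a right adjoint, preserves arbitrary infima (as already noted in Definition of erosion and dilation), hence is increasing, and dually $\delta$ is increasing. Consequently $\delta\varepsilon$ and $\varepsilon\delta$ are increasing as compositions of increasing maps.

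Next, anti-extensivity and extensivity fall out of the adjunction by "reflexivity tricks". For $X \in \Lscr_1$, the trivial $\varepsilon(X) \leq \varepsilon(X)$ combined with the right-to-left direction of the adjunction gives $\delta\varepsilon(X) \leq X$. Dually, for $Y \in \Lscr_2$, $\delta(Y) \leq \delta(Y)$ gives $Y \leq \varepsilon\delta(Y)$.

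The main obstacle is idempotence, which does not drop out immediately and in my view is the only substantive step. I would route it through the two auxiliary identities $\delta\varepsilon\delta = \delta$ and $\varepsilon\delta\varepsilon = \varepsilon$. For the first: applying the increasing map $\delta$ to the extensivity $Y \leq \varepsilon\delta(Y)$ yields $\delta(Y) \leq \delta\varepsilon\delta(Y)$, while applying the anti-extensivity of $\delta\varepsilon$ at the point $\delta(Y) \in \Lscr_1$ yields $\delta\varepsilon\delta(Y) \leq \delta(Y)$; antisymmetry of $\leq$ gives the identity. The second identity is proved symmetrically. Idempotence is then a one-line consequence by associativity of composition: $(\delta\varepsilon)(\delta\varepsilon) = \delta(\varepsilon\delta\varepsilon) = \delta\varepsilon$, and dually $(\varepsilon\delta)(\varepsilon\delta) = \varepsilon(\delta\varepsilon\delta) = \varepsilon\delta$. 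Gathering the three properties finishes both halves of the proposition.
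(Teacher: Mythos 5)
Your proof is correct. The paper itself does not prove this proposition --- it states it with a citation to Heijmans (1994) --- so there is no in-paper argument to compare against; your derivation is the standard one from that reference: monotonicity of $\delta\varepsilon$ and $\varepsilon\delta$ from the fact that erosions and dilations (preserving infima, resp.\ suprema) are increasing, anti-extensivity and extensivity from the two trivial instances $\varepsilon(X)\leq\varepsilon(X)$ and $\delta(Y)\leq\delta(Y)$ of the adjunction, and idempotence via the absorption identities $\delta\varepsilon\delta=\delta$ and $\varepsilon\delta\varepsilon=\varepsilon$. All steps are sound and nothing is missing.
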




\subsection{Logarithmic Image Processing}
\label{ssec:pre:LIP}

The LIP model, introduced by Jourlin et al. \cite{Jourlin2001,Jourlin2016}, is a mathematical framework for image processing based on the physical law of transmittances. Its consistency with the human visual model \cite{Brailean1991} makes it suitable not only for images acquired with transmitted light but also for images acquired with reflected light. Due to the relation, $T_f = 1 - f/M$, between the transmittance $T_f(x)$ at point $x$ and the grey level $f(x)$, the grey scale is inverted in the LIP-model: 0 corresponds to the white extremity, when no obstacle is placed between the light source and the sensor, whereas $M$ corresponds to the black value, when the source cannot be transmitted through the obstacle. 
The addition of two images corresponds to the superposition of two obstacles generating the images $f$ and $g$: 
\begin{equation}
	f \LIPplus g = f + g - fg/M. \label{eq:LIP:plus}%
\end{equation}
	The multiplication of an image $f$ by a real number $\la$ is deduced from the equation \ref{eq:LIP:plus}, $\lambda \LIPtimes f = M - M \left( 1 - f/M \right)^{\lambda}$, and corresponds to a variation of thickness (or opacity) of the object.
The opposite function $\LM f$ of $f$ and the difference between two grey level functions $f$ and $g$ are expressed by:
\begin{align}
	\LM f = (-f)/(1-f/M) & \qquad \text{and} &	f \LM g = (f-g)(1-g/M). \label{eq:LIP:minus}%
\end{align}

Let us note that $\LM f$ is not an image (as it takes negative values) and $f \LM g$ is an image if and only if $f \geq g$.
\begin{property}[\textbf{\textit{A strong physical property}}]
The negative values $\LM f$, with $f \geq 0$, are light intensifiers that can be used to compensate the light attenuation due a variation of exposure-time (or light intensity) \cite{Jourlin2016}.
\end{property}
\begin{property}[\textbf{\textit{Mathematical properties}}]
Let $\Fcurv = ]-\infty,M[^D$ be the space of functions with values in $]-\infty,M[$. The space $(\Fcurv,\LP,\LT)$ is a real \textit{vector space} and $(\I,\LP,\LT)$ represents the positive cone of this vector space \cite{Jourlin2001,Jourlin2016}. $\Fcurv$ and $\I$ are both ordered by the usual order $\leq$ \cite{Jourlin2001}.
\end{property}

%
%

\section{Logarithmic-Mathematical Morphology}
\label{sec:LIP_MM}

MM is defined on the lattice of functions with real values $\Realb^D$. When performing a dilation by a (non-flat) structuring function, the dilated function may have values which exceed the range $[0,M[$. In order to solve this issue and to perform operations adaptive to light variations, let us extend MM with Logarithmic-MM, on the lattice of functions $\Fcurvb=[-\infty,M]^D$ with values in $[-\infty,M]$. First of all, the fundamental operators of erosion and dilation are needed:

\begin{proposition}
Let $f \in \Fcurvb$ be a function and $b \in ]-\infty,M[^{D_b}$ a structuring function, the mappings in $\Fcurvb$ defined by
\begin{align}
\delta_b^{\LP}(f)(x)		 &=\vee 	 \left\{ f(x - h) \LP b(h), h \in D_b\right\} \label{eq:logarithmic-dilation}\\
\varepsilon_b^{\LP}(f)(x)&=\wedge \left\{ f(x + h) \LM b(h), h \in D_b\right\} \label{eq:logarithmic-erosion}%
\end{align}
form an adjunction $(\varepsilon_b^{\LP} , \delta_b^{\LP})$, with $\varepsilon_b^{\LP}$ an erosion and $\delta_b^{\LP}$ a dilation.
$\varepsilon_b^{\LP}$ is called a logarithmic-erosion and $\delta_b^{\LP}$ a logarithmic-dilation.
\label{prop:LMM:base_operators}
\end{proposition}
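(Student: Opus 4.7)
The plan is to prove the adjunction identity $\delta_b^{\LP}(g)\le f \iff g \le \varepsilon_b^{\LP}(f)$ directly; once this is established, the fact that $\varepsilon_b^{\LP}$ is an erosion and $\delta_b^{\LP}$ is a dilation follows immediately from the proposition already stated (``in an adjunction $(\varepsilon,\delta)$, $\varepsilon$ is an erosion and $\delta$ a dilation''). The whole argument mirrors the standard proof for classical morphology with $+,-$ replaced by $\LP,\LM$, and its correctness rests entirely on the vector-space structure of $(\Fcurv,\LP,\LT)$ recalled in the previous section.

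The first step is a scalar adjunction: for $u \in [-\infty,M]$, $w \in [-\infty,M]$ and $v \in \,]-\infty,M[$,
$$u \LP v \le w \iff u \le w \LM v.$$
On the open interval $]-\infty,M[$ this is an immediate consequence of two facts: $\LP v$ is strictly monotone increasing (since $(\Fcurv,\LP)$ is an ordered commutative group inherited from the vector space $(\Fcurv,\LP,\LT)$), and $\LM v$ is the $\LP$-inverse of $v$, so applying $\LP(\LM v)$ to both sides of the inequality preserves its direction and yields the claim. The boundary values $-\infty$ and $M$ are handled by taking limits: since $M \LP v = M$, $-\infty \LP v = -\infty$, and analogously for $\LM v$, the equivalence persists on all of $[-\infty,M]$.

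The second step lifts this pointwise equivalence to $\Fcurvb$. For $f,g \in \Fcurvb$, using the characterisation of suprema,
$$\delta_b^{\LP}(g) \le f \iff \forall x \in D,\ \forall h \in D_b,\ g(x-h) \LP b(h) \le f(x).$$
Applying the scalar adjunction argument of step 1 at each pair $(x,h)$, this is equivalent to $g(x-h) \le f(x) \LM b(h)$ for every $x,h$. Substituting $y = x - h$ and then regrouping by $y$ with the characterisation of infima, this becomes $g(y) \le \bigwedge\{f(y+h) \LM b(h) : h \in D_b\} = \varepsilon_b^{\LP}(f)(y)$, i.e.\ $g \le \varepsilon_b^{\LP}(f)$. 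Every implication is reversible, so the adjunction is proved, and the erosion/dilation claim follows from the cited proposition.

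The main obstacle is not algebraic but bookkeeping: one must verify that the LIP operations extend unambiguously from $\Fcurv$ to $\Fcurvb$ and that the scalar adjunction remains valid when $u$ or $w$ attains $-\infty$ or $M$. The hypothesis $b \in \,]-\infty,M[^{D_b}$ is what makes this possible, because it ensures $\LM b(h)$ is always a finite (possibly negative) element of $\Fcurv$, so that no indeterminate form of the type $M \LP (-\infty)$ ever appears in the supremum or infimum defining $\delta_b^{\LP}$ and $\varepsilon_b^{\LP}$.
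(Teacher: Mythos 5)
Your proof is correct, and its core computation --- the chain of equivalences $\delta_b^{\LP}(g)\le f \Leftrightarrow \forall x,h,\ g(x-h)\LP b(h)\le f(x) \Leftrightarrow \forall y,h,\ g(y)\le f(y+h)\LM b(h) \Leftrightarrow g\le \varepsilon_b^{\LP}(f)$ --- is exactly the third bullet of the paper's proof. Where you differ is in the decomposition: the paper proves all three claims separately, verifying by hand that $\delta_b^{\LP}$ distributes over suprema (and sends $O$ to $O$), that $\varepsilon_b^{\LP}$ distributes over infima (and sends $I$ to $I$), and then that the pair is an adjunction; you prove only the adjunction and obtain the erosion/dilation properties for free from the already-cited proposition that in any adjunction $(\varepsilon,\delta)$ the left member is an erosion and the right a dilation. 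Your route is more economical, and it is in one respect tighter: the paper's direct verification only exhibits distributivity over a \emph{binary} supremum $f\vee g$ plus the empty-set case, whereas the complete-lattice definition requires distributivity over arbitrary subsets $\mathscr{X}$; deriving the property from the adjunction covers arbitrary suprema automatically. What the paper's longer route buys is an explicit display of where the monotonicity of $\LP$ is used and a self-contained check that the extreme elements behave correctly. Your proof also makes explicit two points the paper leaves implicit: the scalar adjunction $u\LP v\le w \Leftrightarrow u\le w\LM v$ (which follows from $u\LP v=u(1-v/M)+v$ with $1-v/M>0$), and the verification that the equivalence survives at the boundary values $-\infty$ and $M$ of $\Fcurvb$, which is precisely where the hypothesis $b\in\,]-\infty,M[^{D_b}$ is needed.
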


\begin{proof}{of proposition \ref{prop:LMM:base_operators}.}
\begin{itemize}[wide, labelwidth=0pt, labelindent=0pt]
\item $\delta_b^{\LP}$ is a dilation (def. \ref{pre:def_morpho_base}). As the operation $\LP$ preserves the order $\leq$ \cite{Jourlin2001}, we have $\forall f, g \in \Fcurvb$, $\forall x \in D$,
$\delta_b^{\LP} ( f \vee g )(x)	= \vee_{h \in D_b} \left\{ \left( (f \vee g)(x-h) \right) \LP b(h) \right\} 
	= \vee_{h \in D_b} \left\{ \left( f(x-h) \LP b(h) \right) \vee \left( g(x-h) \LP b(h) \right) \right\}
	= \left[ \vee_{h \in D_b} \left\{ f(x-h) \LP b(h)\right\} \right] \vee$\\ $\left[ \vee_{h \in D_b} \left\{ g(x-h) \LP b(h)\right\} \right]
	= \delta_b^{\LP} ( f )(x) \vee \delta_b^{\LP} ( g )(x)$.

In addition, with $b(h)\in ]-\infty,M[$, we have:
$\delta_b^{\LP} (O)(x) = \delta_b^{\LP}(f_{-\infty})(x) = \vee_{h \in D_b} \{ ( -\infty(x-h) \LP b(h) ) \}
							= \vee_{h \in D_b} \{ -\infty(1-b(h)/M) + b(h) \} = -\infty
							= O(x)$.


\item Similarly, $\forall f, g \in \Fcurvb$, $\varepsilon_b^{\LP} ( f \wedge g ) = \varepsilon_b^{\LP} ( f ) \wedge \varepsilon_b^{\LP} ( g )$ and $\varepsilon_b^{\LP} (I) = \varepsilon_b^{\LP}(f_M)\nonumber = M = I.$ Therefore, $\varepsilon_b^{\LP}$ is an erosion. 

\item $(\varepsilon_b^{\LP} , \delta_b^{\LP})$ is an adjunction because:
$\delta_b^{\LP} (f) \leq g \Leftrightarrow \forall x \in D,\> \vee_{h \in D_b} \{ f(x-h) \LP b(h) \} \leq g(x)
	\Leftrightarrow \forall x \in D, \forall h,\> f(x-h) \LP b(h) \leq g(x)
	\Leftrightarrow \forall y \in D, \forall h,\> f(y) \leq g(y+h) \LM b(h)
	\Leftrightarrow \forall y \in D,\> f(y) \leq \wedge_{h \in D_b} \{g(y+h) \LM b(h)\}
	\Leftrightarrow f \leq \varepsilon_b^{\LP}(g)$.
\end{itemize}
\end{proof}

Let us verify that the new operations are dual by their \textsl{negative function}.

\begin{proposition} 
The \textsl{negative function} \cite{Heijmans1994} is $f^*(x) = \LM f(x)$.
\label{prop:negative_function}
\end{proposition}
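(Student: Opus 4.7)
The plan is to verify, following Heijmans' framework, that the candidate map $f \mapsto f^{*} = \LM f$ is a \emph{negative} on the complete lattice $\Fcurvb = [-\infty,M]^{D}$, meaning an involutive order-reversing bijection. Once these three properties are in hand, the duality between $\delta_{b}^{\LP}$ and $\varepsilon_{b}^{\LP}$ promised in the surrounding sentence follows by a short manipulation of the defining formulas.

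First I would recall from Section~\ref{ssec:pre:LIP} the explicit expression $\LM f = -f/(1 - f/M)$ and check that this formula, extended by the limiting values $\LM M = -\infty$ and $\LM(-\infty) = M$, is a well-defined self-map of $\Fcurvb$. The involution property $(f^{*})^{*} = f$ is then a routine algebraic simplification: substituting $\LM f$ into itself yields, after clearing the compound fraction, exactly $f$; the two boundary values exchange under a second application, so the extension is consistent. This same computation shows that $\ast$ is a bijection on $\Fcurvb$.

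Second, I would establish order reversal. The cleanest route is to view $\LM$ as a function of the real variable $f \in [-\infty,M]$ and note that its derivative $-1/(1-f/M)^{2}$ is strictly negative, so $\LM$ is strictly decreasing and hence $f \leq g \Leftrightarrow g^{*} \leq f^{*}$ pointwise. Combined with the involution property, this is equivalent to saying that $\ast$ swaps suprema and infima, i.e. $(\vee \mathscr{X})^{*} = \wedge \{X^{*} : X \in \mathscr{X}\}$ for every $\mathscr{X} \subset \Fcurvb$, which is the working form of the negative needed later.

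Finally, to make good on the claim that the new operators are dual through $\ast$, I would unfold $\delta_{b}^{\LP}(f^{*})(x) = \vee_{h} \{ (\LM f)(x-h) \LP b(h) \}$, apply $\ast$ to both sides, and use the identity $\LM(a \LP b) = (\LM a) \LM b$ (a direct consequence of the vector-space structure on $(\Fcurv,\LP,\LT)$) together with order reversal to recover $\varepsilon_{b}^{\LP}(f)(x) = \wedge_{h}\{ f(x+h) \LM b(h)\}$. The only delicate point is the bookkeeping around the infinite boundary values of $\Fcurvb$, where the formula $\LM f$ must be interpreted by its limits; once those two limits are pinned down, the rest is straightforward algebra using properties of $\LP$ already recalled in Section~\ref{ssec:pre:LIP}.
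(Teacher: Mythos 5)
The paper offers no proof of this proposition at all: it is stated as a fact (essentially a definition), with its justification deferred to Proposition~\ref{prop:LMM_duality}, which is then proved by direct computation. Your proposal therefore supplies something the paper omits, and the verification is correct. Pointwise, $\LM$ is the map $t \mapsto -t/(1-t/M) = -tM/(M-t)$, which is a strictly decreasing involution of $]-\infty,M[$ onto itself: your derivative $-1/(1-t/M)^{2}$ is right, the simplification $(t^{*})^{*}=t$ checks out, and the extension to $[-\infty,M]$ by swapping the endpoints $M \leftrightarrow -\infty$ is consistent with the limits. Applied pointwise this gives an involutive, order-reversing bijection of the complete lattice $\Fcurvb$, hence one that exchanges arbitrary suprema and infima --- exactly what ``negative function'' means in Heijmans' framework, so the proposition is justified. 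The identity $\LM(a \LP b) = (\LM a) \LM b$ that you invoke is also correct (it follows from $\LM = (-1)\LT$ being $\LP$-linear together with $x \LM y = x \LP (\LM y)$), and it is precisely the step the paper uses, without comment, in its proof of Proposition~\ref{prop:LMM_duality}. One small slip in your final paragraph: applying $*$ to $\delta_{b}^{\LP}(f^{*})$ produces $\varepsilon_{\overline{b}}^{\LP}(f)$ with the \emph{reflected} structuring function $\overline{b}(h)=b(-h)$, after the change of variable $h \mapsto -h$, not $\varepsilon_{b}^{\LP}(f)$; the paper states this correctly in Proposition~\ref{prop:LMM_duality}, and that computation really belongs there rather than to the present statement.
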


\begin{proposition} 
Let $\overline{b}(x) = b(-x)$ be the reflection of the structuring function $b$, the logarithmic-erosion $\varepsilon_{b}^{\LP}$ and dilation $\delta_b^{\LP}$ are dual by their \textsl{negative function}:
\begin{align}
	(\delta_b^{\LP} (f^*))^* = \varepsilon_{\overline{b}}^{\LP} (f) \quad \text{and} \quad (\varepsilon_{b}^{\LP} (f^*))^* = \delta_{\overline{b}}^{\LP} (f). \label{eq:duality_LMM}
\end{align}
\label{prop:LMM_duality}
\end{proposition}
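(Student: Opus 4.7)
The plan is to prove both identities by direct computation, leveraging three algebraic properties of the LIP negation $\LM$ that are immediate consequences of the vector space structure of $(\Fcurv,\LP,\LT)$ recalled in the prerequisites: \textbf{(i)} $\LM$ is an involution, so $f^{**}=f$; \textbf{(ii)} $\LM$ reverses the order $\leq$, hence it exchanges suprema and infima; \textbf{(iii)} $\LM$ is the additive inverse in the abelian group $(\Fcurv,\LP)$, so $\LM(a\LP b)=\LM a\LP \LM b$, and in particular $a\LP\LM b=a\LM b$.

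For the first identity $(\delta_b^{\LP}(f^*))^* = \varepsilon_{\overline{b}}^{\LP}(f)$, I would unfold the left-hand side using the definition of $\delta_b^{\LP}$ and of the negative function,
\[
(\delta_b^{\LP}(f^*))^*(x) \;=\; \LM \bigvee_{h \in D_b}\bigl\{(\LM f)(x-h) \LP b(h)\bigr\}.
\]
Pulling $\LM$ inside the supremum via \textbf{(ii)} turns it into an infimum, and applying \textbf{(iii)} together with \textbf{(i)} to the bracket yields
\[
\bigwedge_{h \in D_b}\bigl\{f(x-h) \LP \LM b(h)\bigr\} \;=\; \bigwedge_{h \in D_b}\bigl\{f(x-h) \LM b(h)\bigr\}.
\]
A change of variable $h \mapsto -h$ then rewrites this as $\bigwedge_{h \in -D_b}\{f(x+h) \LM \overline{b}(h)\}$, which is precisely $\varepsilon_{\overline{b}}^{\LP}(f)(x)$ from equation \ref{eq:logarithmic-erosion}. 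The second identity follows either by the symmetric argument (infima becoming suprema under $\LM$, and $a\LM b = a\LP \LM b$ used in the other direction) or, more quickly, by substituting $f \leftarrow f^*$ in the first identity and using involution to cancel the double negation on both sides.

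The main obstacle, if any, is verifying that properties \textbf{(i)}--\textbf{(iii)} remain valid in the extended lattice $\Fcurvb=[-\infty,M]^D$ where $f$ may take the value $-\infty$. Since $b(h)\in\,]-\infty,M[$, the quantities $f(x-h)\LP b(h)$ and $f(x+h)\LM b(h)$ are well defined under the natural conventions $(-\infty)\LP c=-\infty$ and $\LM(-\infty)=M$, which are already tacitly used in the proof of Proposition \ref{prop:LMM:base_operators}. With these conventions in hand, the entire argument is a faithful transcription of the classical additive duality $(\delta_b(-f))^* = -\varepsilon_{\overline{b}}(f)$, with $\LP$ and $\LM$ replacing $+$ and $-$.
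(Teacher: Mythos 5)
Your proof is correct and follows essentially the same route as the paper's: unfold the definition, use that $\LM$ is an order-reversing involution distributing over $\LP$ to turn the supremum of $\LM f(x-h)\LP b(h)$ into the infimum of $f(x-h)\LM b(h)$, and finish with the change of variable $h\mapsto -h$ introducing $\overline{b}$. The only additions beyond the paper's one-line computation are welcome ones — deriving the second identity by substituting $f^*$ into the first, and checking the conventions at $-\infty$ — but they do not change the argument.
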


\begin{proof}
$(\delta_b^{\LP} (f^*))^*(x) = \LM (\vee_{h \in D_b} \{ \LM f(x-h) \LP b(h)\})
	= \wedge_{h \in D_b} \{f(x-h) \LM b(h)\}
	= \wedge_{h \in D_{\overline{b}}} \{f(x+h) \LM \overline{b}(h)\} = \varepsilon_{\overline{b}}^{\LP} (f)(x)$.
	Similarly, we have $(\varepsilon_{b}^{\LP} (f^*))^* = \delta_{\overline{b}}^{\LP} (f)$.
\end{proof}

As $(\varepsilon_b^{\LP} , \delta_b^{\LP})$ is an adjunction, an opening and a closing can be defined \cite{Heijmans1994}.
\begin{proposition}
Given an adjunction $(\varepsilon_b^{\LP} , \delta_b^{\LP})$, the operator $\gamma_{b}^{\LP} = \delta_b^{\LP} \varepsilon_b^{\LP}$ is an opening and $\varphi_{b}^{\LP} = \varepsilon_b^{\LP} \delta_b^{\LP}$ is a closing (by adjunction). $\gamma_{b}^{\LP}$ is called a logarithmic-opening and $\varphi_{b}^{\LP}$ a logarithmic-closing.
\label{def:LMM:opening_closing}
\end{proposition}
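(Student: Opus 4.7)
The plan is to simply invoke the general lattice-theoretic fact already recorded earlier in Section \ref{ssec:pre:MM}: if $(\varepsilon, \delta)$ is an adjunction between two complete lattices, then $\delta \varepsilon$ is an opening on the source lattice and $\varepsilon \delta$ is a closing on the target lattice. Since Proposition \ref{prop:LMM:base_operators} has already established that $(\varepsilon_b^{\LP}, \delta_b^{\LP})$ is an adjunction on $\Fcurvb$, the claim follows as a one-line corollary. So the proof is essentially a citation, and its content is entirely shouldered by Proposition \ref{prop:LMM:base_operators}.

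If a self-contained verification is preferred, I would derive the three defining axioms of an opening (resp. closing) directly from the adjunction property. First, increasingness of $\gamma_b^{\LP}$ and $\varphi_b^{\LP}$ follows from the fact that $\varepsilon_b^{\LP}$ and $\delta_b^{\LP}$ are each increasing (being an erosion, resp. a dilation, in the sense of Definition \ref{pre:def_morpho_base}), and composition preserves increasingness. Second, anti-extensivity of $\gamma_b^{\LP}$ and extensivity of $\varphi_b^{\LP}$ both come from specialising the adjunction equivalence in Definition \ref{pre:def_adjunction}: plugging $Y = \varepsilon_b^{\LP}(f)$ into $\delta_b^{\LP}(Y) \leq X \Leftrightarrow Y \leq \varepsilon_b^{\LP}(X)$ with $X = f$ yields $\delta_b^{\LP} \varepsilon_b^{\LP}(f) \leq f$, and symmetrically with $X = \delta_b^{\LP}(f)$ one gets $f \leq \varepsilon_b^{\LP} \delta_b^{\LP}(f)$.

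Third, idempotence reduces to the standard lattice identities $\varepsilon_b^{\LP} \delta_b^{\LP} \varepsilon_b^{\LP} = \varepsilon_b^{\LP}$ and $\delta_b^{\LP} \varepsilon_b^{\LP} \delta_b^{\LP} = \delta_b^{\LP}$, each of which is obtained by sandwiching extensivity/anti-extensivity with monotonicity: applying $\varepsilon_b^{\LP}$ to $\delta_b^{\LP} \varepsilon_b^{\LP}(f) \leq f$ gives one inequality, while $\varepsilon_b^{\LP}(f) \leq \delta_b^{\LP} \varepsilon_b^{\LP}(\varepsilon_b^{\LP}(f))$ from extensivity of $\varphi_b^{\LP}$ applied to $\varepsilon_b^{\LP}(f)$ gives the other. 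Composing then produces $(\gamma_b^{\LP})^2 = \delta_b^{\LP} (\varepsilon_b^{\LP} \delta_b^{\LP} \varepsilon_b^{\LP}) = \delta_b^{\LP} \varepsilon_b^{\LP} = \gamma_b^{\LP}$, and analogously for $\varphi_b^{\LP}$.

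There is no real obstacle here: all three axioms are formal consequences of the adjunction, and none of the argument uses any specific structure of $\LP$ or $\LM$ beyond what is already absorbed in Proposition \ref{prop:LMM:base_operators}. The only thing to be mildly careful about is that the ambient lattice is $\Fcurvb = [-\infty, M]^D$ rather than $\I$, so the inequalities $\delta_b^{\LP} \varepsilon_b^{\LP}(f) \leq f \leq \varepsilon_b^{\LP} \delta_b^{\LP}(f)$ hold pointwise in $[-\infty, M]$, which is exactly the setting in which Proposition \ref{prop:LMM:base_operators} was stated.
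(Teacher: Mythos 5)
Your proposal is correct and takes exactly the paper's route: the paper proves this proposition purely by invoking the general adjunction-to-opening/closing result of Heijmans already quoted in Section \ref{ssec:pre:MM}, applied to the adjunction established in Proposition \ref{prop:LMM:base_operators}. Your optional self-contained verification is also sound, apart from a transposition typo in the idempotence step, where the inequality should read $\varepsilon_b^{\LP}(f) \leq \varepsilon_b^{\LP} \delta_b^{\LP}(\varepsilon_b^{\LP}(f))$ (extensivity of $\varphi_b^{\LP}=\varepsilon_b^{\LP}\delta_b^{\LP}$, as you correctly state in words) rather than $\varepsilon_b^{\LP}(f) \leq \delta_b^{\LP} \varepsilon_b^{\LP}(\varepsilon_b^{\LP}(f))$.
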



A relation between the logarithmic operations and the ``classical '' ones exists. This facilitates the implementation of the logarithmic operations, as the ``classical'' ones are available in many image analysis toolboxes.

\begin{proposition}
	Let $f \in \overline{\Fcurv}$ be a function and $b \in ]-\infty,M[^{D_b}$ a structuring function, the logarithmic-dilation $\delta_b^{\LP}$ and the logarithmic-erosion $\varepsilon_b^{\LP}$ are related to the dilation $\delta_b$ and erosion $\varepsilon_b$ by:
	\begin{align}
		\delta_b^{\LP} f = M (1 - \exp{(- \delta_{\acute{b}}(\acute{f}) )}) & \qquad\text{and} & \varepsilon_b^{\LP} f = M (1 - \exp{(-\varepsilon_{\acute{b}}(\acute{f})   )})\label{eq:LIP_dil_ero_prop}%
	\end{align}
	with $\acute{f} = -\ln{\left( 1 - f/M \right)}$, $\acute{f} \in \Realb$.
	\label{prop:link_MM_LMM}
\end{proposition}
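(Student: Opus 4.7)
The plan is to exploit the well-known fact that the map
\[
\Phi : \Fcurvb \to \Realb^D, \qquad \Phi(f) = \acute{f} = -\ln(1 - f/M)
\]
is an order-preserving bijection that turns the LIP-addition into the ordinary addition. Concretely, if I first show the pointwise identities
\[
\acute{f \LP g} = \acute{f} + \acute{g}, \qquad \acute{f \LM g} = \acute{f} - \acute{g},
\]
and the monotonicity of $\Phi$, then the two equations in the proposition drop out by conjugating the defining formulas \eqref{eq:logarithmic-dilation}, \eqref{eq:logarithmic-erosion} with $\Phi$ and $\Phi^{-1}$.

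First I would verify the algebraic identities. From \eqref{eq:LIP:plus}, a direct rearrangement gives $1 - (f \LP g)/M = (1 - f/M)(1 - g/M)$, so taking $-\ln$ yields $\acute{f \LP g} = \acute{f} + \acute{g}$. Likewise, from \eqref{eq:LIP:minus} one checks that $1 - (f \LM g)/M = (1 - f/M)/(1 - g/M)$, whence $\acute{f \LM g} = \acute{f} - \acute{g}$. Since $b \in {]{-\infty},M[}^{D_b}$, the values $\acute{b}(h)$ are finite, so the additions and subtractions above are well-defined on the extended reals; at the boundary I use the conventions $\Phi(-\infty)=-\infty$ and $\Phi(M)=+\infty$.

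Next I note that $t \mapsto -\ln(1 - t/M)$ is a strictly increasing bijection from $[-\infty,M]$ onto $\Realb$, so $\Phi$ commutes with $\vee$ and $\wedge$. Applying $\Phi$ to \eqref{eq:logarithmic-dilation} gives
\[
\Phi\bigl(\delta_b^{\LP}(f)\bigr)(x) = \bigvee_{h\in D_b}\Phi\bigl(f(x-h)\LP b(h)\bigr) = \bigvee_{h\in D_b}\bigl(\acute{f}(x-h) + \acute{b}(h)\bigr) = \delta_{\acute{b}}(\acute{f})(x),
\]
by \eqref{eq:dilate_funct}. Inverting $\Phi$ via $\Phi^{-1}(y) = M(1 - e^{-y})$ yields the first formula of \eqref{eq:LIP_dil_ero_prop}. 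The erosion case is strictly analogous, using the identity $\acute{f \LM g} = \acute{f} - \acute{g}$ together with \eqref{eq:erode_funct}.

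The only subtlety, and the one potential obstacle, is checking that the boundary behaviour of $\Phi$ is harmless: one must confirm that $\Phi$ still commutes with suprema/infima when some arguments equal $-\infty$ or $M$, and that $f(x-h) \LP b(h)$ and $f(x+h) \LM b(h)$ remain in $\Fcurvb$ so the $\ln$ is defined. Since $\Phi$ is a continuous strictly increasing map between the two extended order-complete chains, and since $b(h)$ is finite, this verification is routine; everything else is a direct algebraic substitution.
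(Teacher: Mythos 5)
Your proposal is correct and follows essentially the same route as the paper: the paper also conjugates the defining supremum/infimum by the increasing bijection $\xi(f)=M(1-e^{-f/M})$ (equivalently your $\Phi^{-1}$, up to a harmless factor of $M$), uses the identity $1-(f\LP b)/M=(1-f/M)(1-b/M)$ to turn $\LP$ into $+$, and invokes the fact that increasing bijections distribute over suprema and infima. The only difference is organizational — you isolate the homomorphism identities as a preliminary lemma, while the paper performs the same manipulation inline.
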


\begin{proof}
The dilation $\delta_b$ and the erosion $\varepsilon_b$ are mappings of the lattice $\Realb^D$, whereas the logarithmic-dilation $\delta_b^{\LP}$ and erosion $\varepsilon_b^{\LP}$ are mappings of the lattice $[-\infty,M]^D$. In order to link these operations, a bijective mapping (i.e. an isomorphism) is needed between these two lattices. Such an isomorphism $\xi: \Realb^D \rightarrow [-\infty,M]^D$ and its inverse $\xi^{-1}$ are both defined in \cite{Jourlin1995} by $[\xi(f)](x) = M(1-\exp{(-f/M)})$ and $\left[\xi^{-1}(f)\right](x) = -M \ln{(1-f/M)}$. As increasing bijections, $\xi$ ans $\xi^{-1}$  distribute over infima and suprema. Therefore, the dilation $\delta_b^{\LP}$ can be expressed by:
$\delta_b^{\LP}f(x) = \xi \circ \xi^{-1} (\vee_{h \in D_b} \left\{ f(x-h) \LP b(h) \right\})
	= \vee_{h \in D_b} \{ M(1 - e^{\ln{\left(1 - \frac{f(x-h) \LP b(h)}{M} \right)} } )\}
	= M(1 - e^{ \wedge_{h \in D_b} \ln{\left( (1-\frac{f(x-h)}{M}) (1-\frac{b(h)}{M}) \right)} })
	= M(1 - e^{-\vee_{h \in D_b} \left\{ -\ln{ \left(1-\frac{f(x-h)}{M}\right) } - \ln{\left(1-\frac{b(h)}{M}\right)} \right\} })
	= M(1 - e^{-\delta_{\acute{b}} (\acute{f}) })$.
Similarly, we have $\varepsilon_b^{\LP} = M(1 - e^{-\varepsilon_{\acute{b}}(\acute{f}) })$.
\end{proof}

%
%

\section{Results}
\label{sec:res}

\begin{figure}[htb]
	\centering
	\begin{tabular}{c@{\hspace{0.8cm}}c}
  \includegraphics[width=0.30\columnwidth]{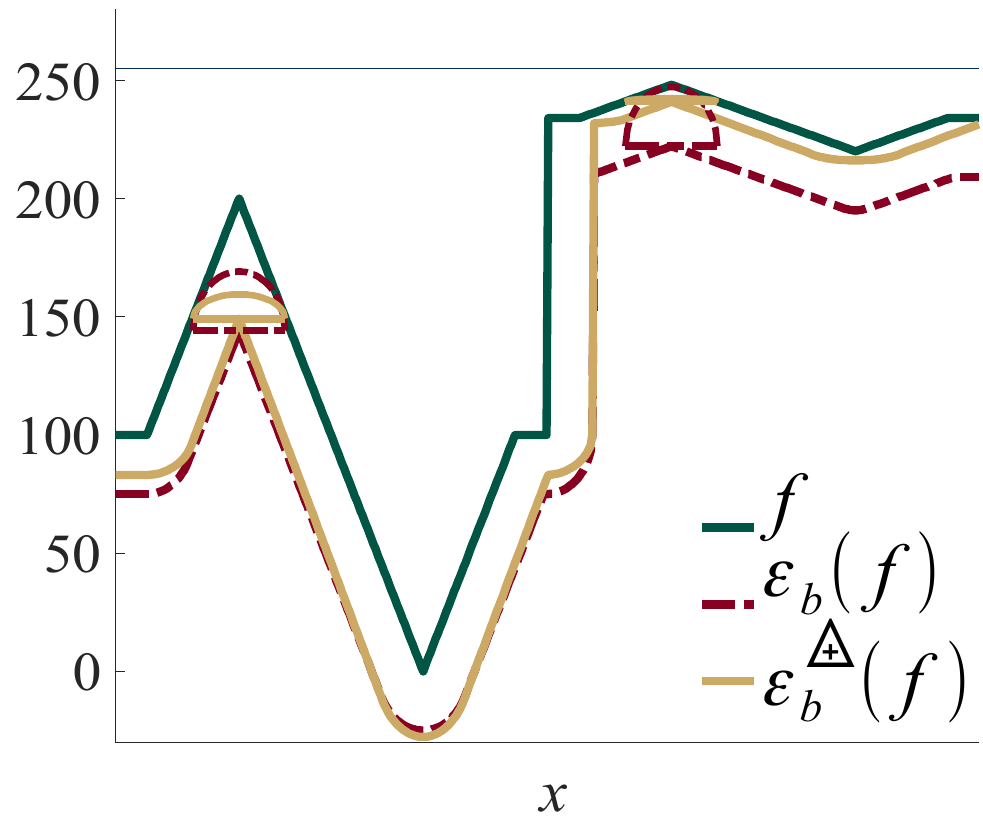}&
	\includegraphics[width=0.30\columnwidth]{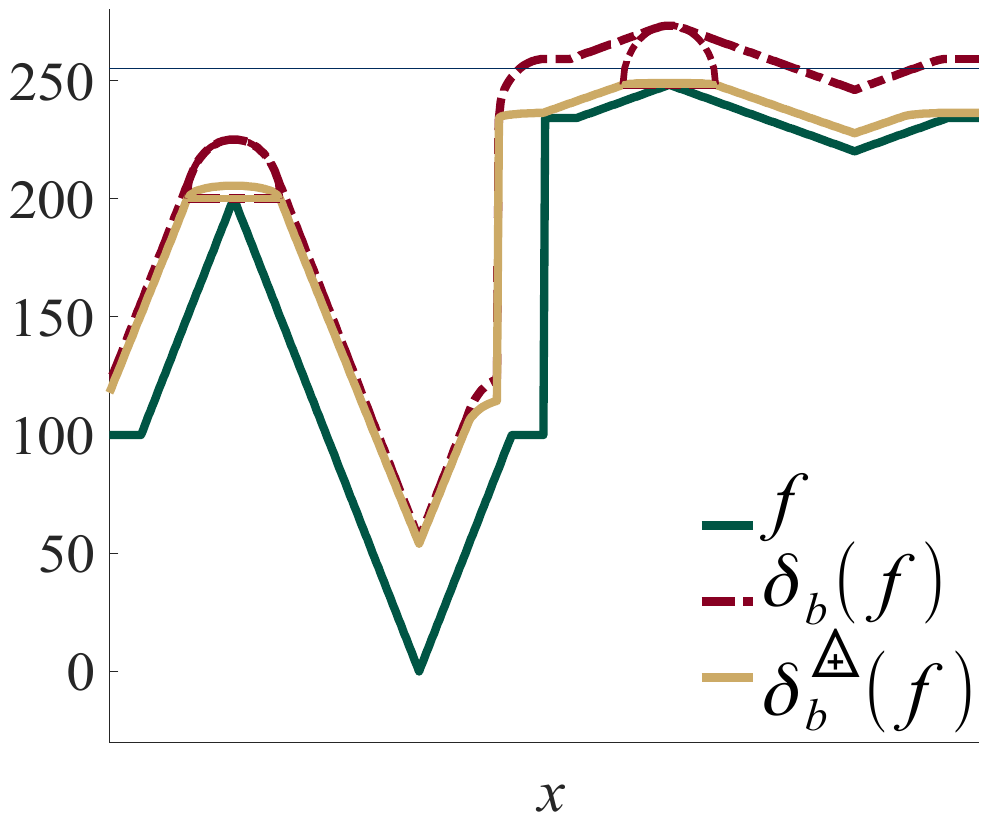}\\
	(a) Erosions &
	(b) Dilations\\
	\includegraphics[width=0.30\columnwidth]{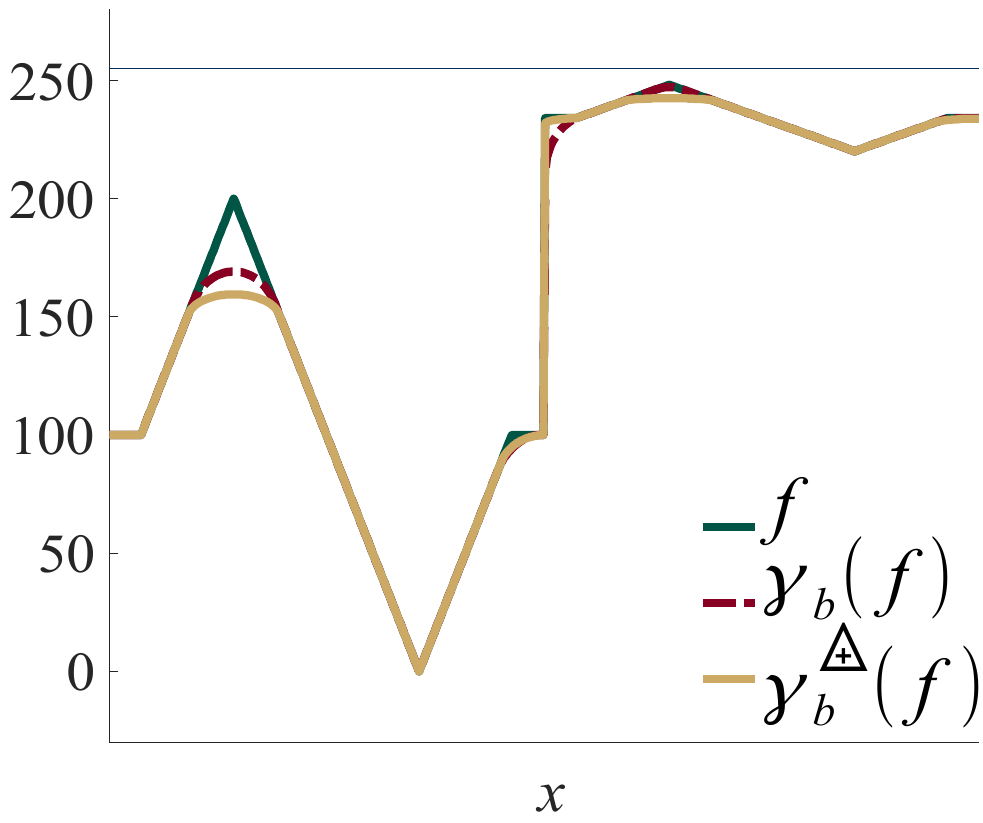}&
	\includegraphics[width=0.30\columnwidth]{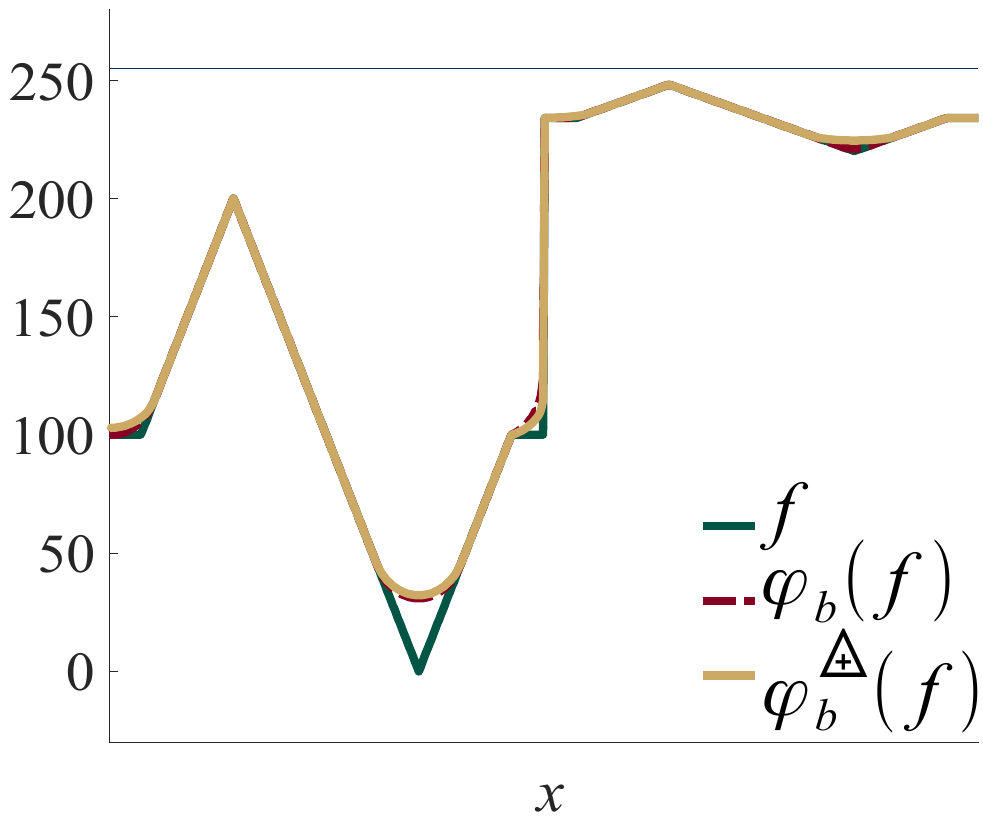}\\
	(c) Openings&	
	(d) Closings
	\end{tabular}
\caption{Comparison between the operations of ``classical'' MM and Logarithmic-MM: (a) erosions $\varepsilon_b(f)$, $\varepsilon_b^{\protect \LP}(f)$, (b) dilations $\delta_b(f)$, $\delta_b^{\protect \LP}(f)$, (c) openings $\gamma_b(f)$, $\gamma_b^{\protect \LP}(f)$ and (d) closings $\varphi_b(f)$, $\varphi_b^{\protect \LP}(f)$. (a), (b) The structuring function is represented for both peaks and for all the operations $\varepsilon_b(f)$, $\varepsilon_b^{\protect \LP}(f)$, $\delta_b(f)$ and $\delta_b^{\protect \LP}(f)$.}
\label{fig:comp_MM_vs_LMM_signal}
\end{figure}

In the figure \ref{fig:comp_MM_vs_LMM_signal}, the ``classical'' operations of MM and those of Logarithmic-MM are compared on a simulated signal. For each operation of Logarithmic-MM, the amplitude of the structuring function (sf) changes according to the values of the image because of the LIP-laws, $\LP$ or $\LM$, used in their definition (eq. \ref{eq:logarithmic-dilation} and \ref{eq:logarithmic-erosion}), whereas for the operations of ``classical'' MM the amplitude of the sf remains the same. In the figure \ref{fig:comp_MM_vs_LMM_signal}.b, the values of the logarithmic-dilation $\delta_b^{\LP}(f)$ always remain below the upper limit $M = 256$, whereas for the ``classical'' dilation $\delta_b(f)$, some grey-levels may exceed the limit $M$. Such a property is due to the LIP addition law $\LP$. In the figure \ref{fig:comp_MM_vs_LMM_signal}.a, the lowest values of both erosions $\varepsilon_b^{\LP}(f)$ and $\varepsilon_b(f)$ are negative because of the laws $\LM$  and $-$ used in equations \ref{eq:logarithmic-erosion} and \ref{eq:erode_funct}. For real value functions $\Realb^D$, the negative values have no physical justification, whereas in the LIP model they correspond to light intensifiers \cite{Jourlin2016}. In the figure \ref{fig:comp_MM_vs_LMM_signal}.c (resp. \ref{fig:comp_MM_vs_LMM_signal}.d), the disparity between the openings $\gamma_b (f)$ and $\gamma_b^{\LP} (f)$ (resp. closings $\varphi_b(f)$ and $\varphi_b^{\LP}(f)$) is greater for the grey-levels close to $M$ than for those close to zero. Indeed, the LIP difference $\LM$ is non linear along the grey-level axis.

In the figure \ref{fig:comp_MM_vs_LMM_Nessie}, two images of the same scene are acquired at two different exposure time (i.e. shutter speed): a bright image $f$ (Fig. \ref{fig:comp_MM_vs_LMM_Nessie}.a) and a dark image $f^d$ (Fig. \ref{fig:comp_MM_vs_LMM_Nessie}.d). Both images $f$ and $f^d$ are complemented  ($f^c = M-1-f$) before computing a morphological gradient $\varrho_b(f^c) = \delta_b(f^c) - \varepsilon_b(f^c)$ \cite{Serra1988} (Fig. \ref{fig:comp_MM_vs_LMM_Nessie}.b, \ref{fig:comp_MM_vs_LMM_Nessie}.e) and its logarithmic version $\varrho_b^{\LP}(f^c) = \delta_b^{\LP}(f^c) \LM \varepsilon_b^{\LP}(f^c)$ (Fig. \ref{fig:comp_MM_vs_LMM_Nessie}.c, \ref{fig:comp_MM_vs_LMM_Nessie}.f). For comparison purpose, the amplitudes of each gradient are scaled between 0 and 255. The logarithmic-gradient of the dark image $\varrho_b^{\LP}((f^d)^c)$ (Fig. \ref{fig:comp_MM_vs_LMM_Nessie}.f) finds much more contours than the ``classical'' one $\varrho_b((f^d)^c)$ (Fig. \ref{fig:comp_MM_vs_LMM_Nessie}.e). Even on the bright image $f$, the logarithmic-gradient $\varrho_b^{\LP}(f^c)$ (Fig. \ref{fig:comp_MM_vs_LMM_Nessie}.c) finds more contours than the ``classical'' one  $\varrho_b^{\LP}(f)$, especially on the darkest parts (Fig. \ref{fig:comp_MM_vs_LMM_Nessie}.b). The logarithmic-gradient is also few sensitive to lighting variations (Fig. \ref{fig:comp_MM_vs_LMM_Nessie}.c and f).

\begin{figure}[htb]
	\centering
	\begin{tabular}{@{}c@{\hspace{0.8cm}}c@{\hspace{0.8cm}}c@{}}
  \includegraphics[width=0.29\columnwidth]{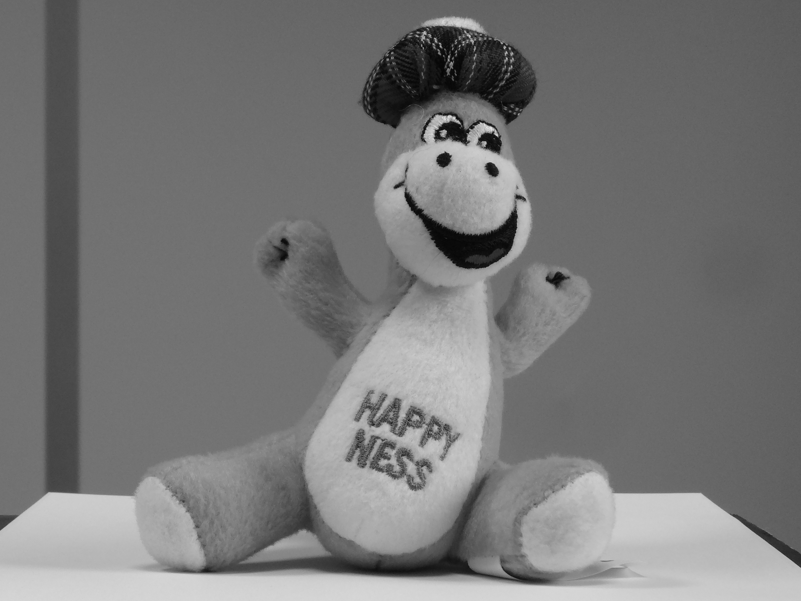}&
	\includegraphics[width=0.29\columnwidth]{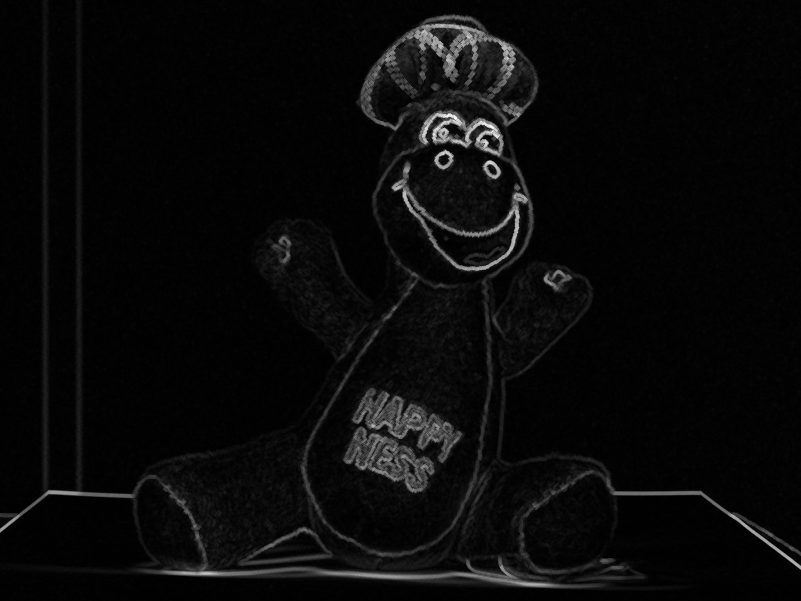}&
	\includegraphics[width=0.29\columnwidth]{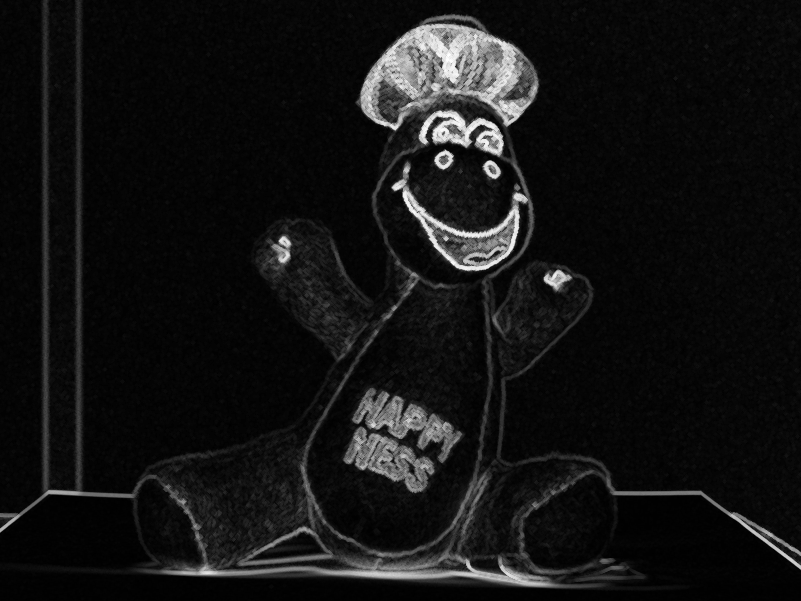}\\
	(a) Bright image $f$&
	(b) Gradient $\varrho_b(f^c)$&
	(c) L-gradient $\varrho_b^{\LP}(f^c)$\\
	\includegraphics[width=0.29\columnwidth]{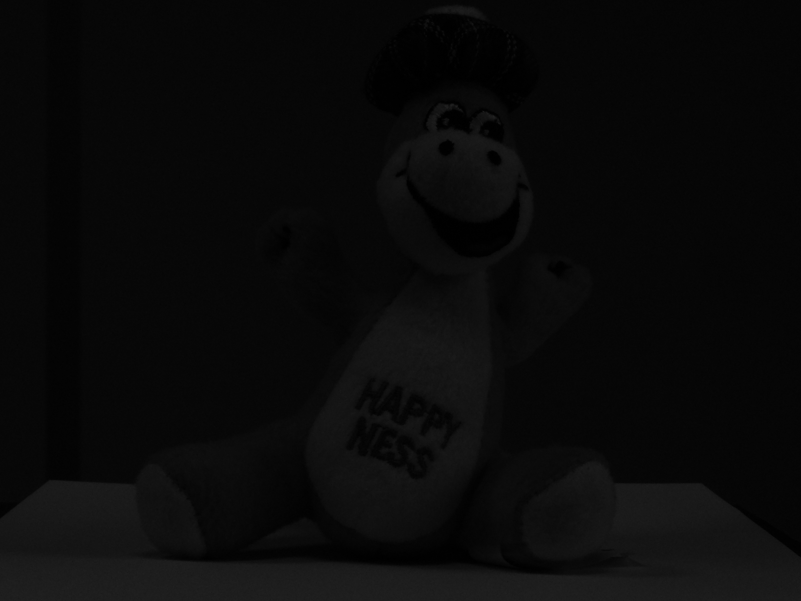}&
	\includegraphics[width=0.29\columnwidth]{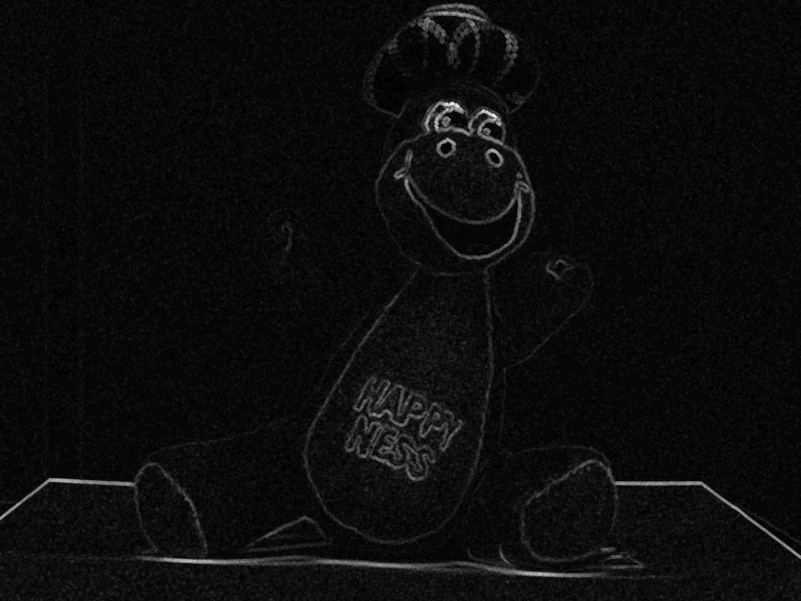}&
	\includegraphics[width=0.29\columnwidth]{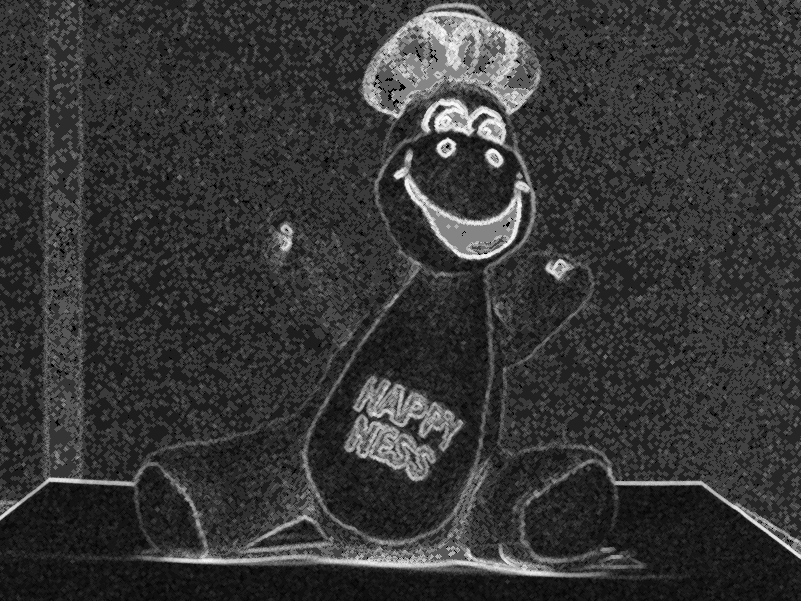}\\
	(d) Dark image $f^d$&
	(e) Gradient $\varrho_b((f^d)^c)$&
	(f) L-gradient $\varrho_b^{\LP}((f^d)^c)$
	\end{tabular}
\caption{Comparison between the gradient $\varrho_b$ (b, e) and the Logarithmic-gradient $\varrho_b^{\protect \LP}$ (c, f) on a bright image $f$ (a) (acquired with an exposure time of 1/40 s) and on a dark image of the same scene (d) (exposure time of 1/800 s). The sf $b$ is an hemisphere of radius 2 pixels.}
\label{fig:comp_MM_vs_LMM_Nessie}
\end{figure}

In the figure \ref{fig:comp_open_close_vs_Lopen_Lclose_Nessie}, an opening $(\gamma_b((f^d)^c))^c$, a closing $(\varphi_b((f^d)^c))^c$, a logarithmic-opening $(\gamma_b^{\LP}((f^d)^c))^c$ and a logarithmic-closing $(\varphi_b^{\LP}((f^d)^c))^c$ are compared on the complement of the dark image $f^d$ (Fig. \ref{fig:comp_MM_vs_LMM_Nessie}.b), using an hemisphere of radius 15 pixels as structuring function. For comparison purpose, the amplitudes of each filtered image are scaled between 0 and 255. The ``classical'' opening and closing (Fig. \ref{fig:comp_open_close_vs_Lopen_Lclose_Nessie}.a, \ref{fig:comp_open_close_vs_Lopen_Lclose_Nessie}.c) have a limited effect in terms of transformation whereas the logarithmic-opening and closing have a more important effect (Fig. \ref{fig:comp_open_close_vs_Lopen_Lclose_Nessie}.b, \ref{fig:comp_open_close_vs_Lopen_Lclose_Nessie}.d). E.g. on the body of the soft toy monster, the words are removed with the logarithmic-opening and closing and not with the ``classical'' opening and closing.

Those examples illustrate the property of the logarithmic-MM to adapt to important contrast variations caused by varying illumination conditions.

\begin{figure}[htb]
	\centering
  \begin{tabular}{c@{\hspace{0.8cm}}c}
  \includegraphics[width=0.29\columnwidth]{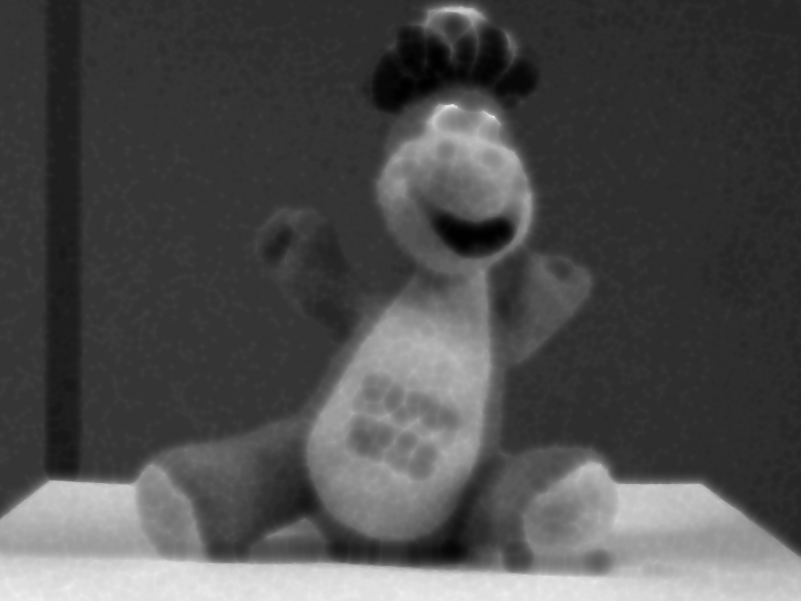}&
	\includegraphics[width=0.29\columnwidth]{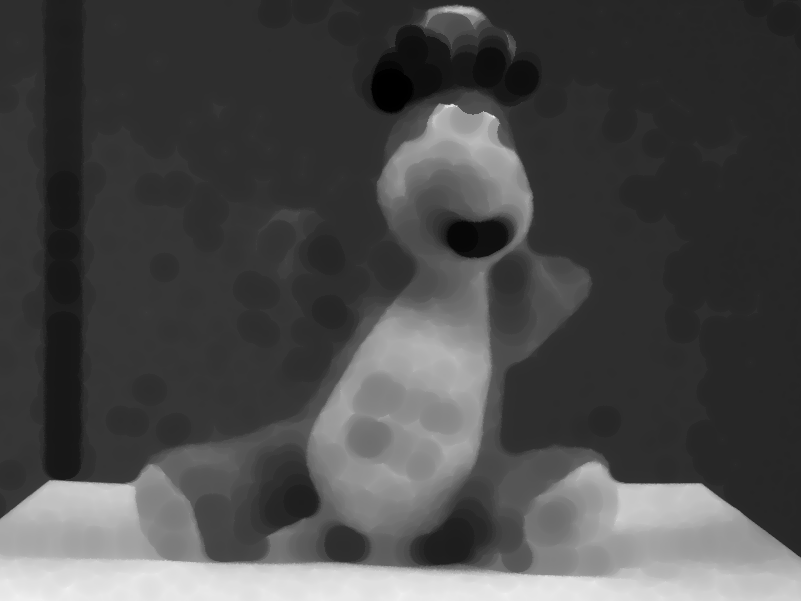}\\
	(a) Opening&
	(b) L-opening\\
	\includegraphics[width=0.29\columnwidth]{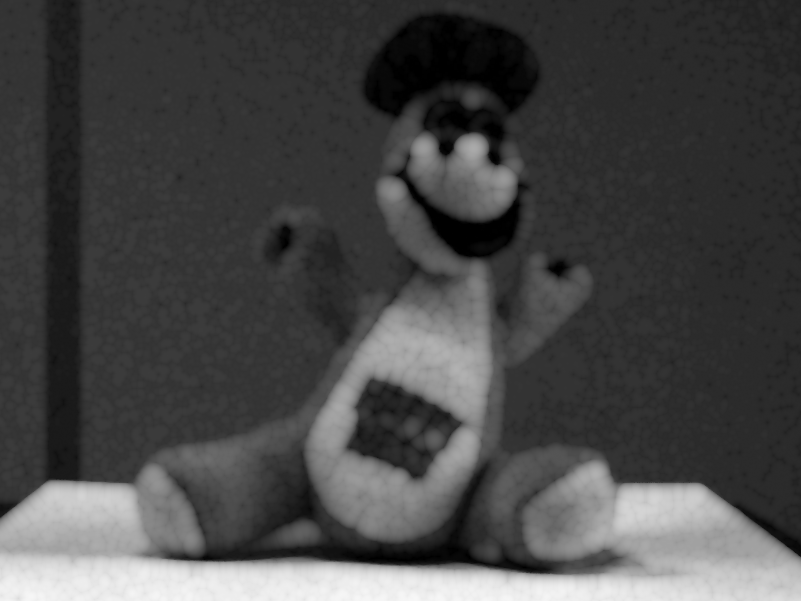}&
	\includegraphics[width=0.29\columnwidth]{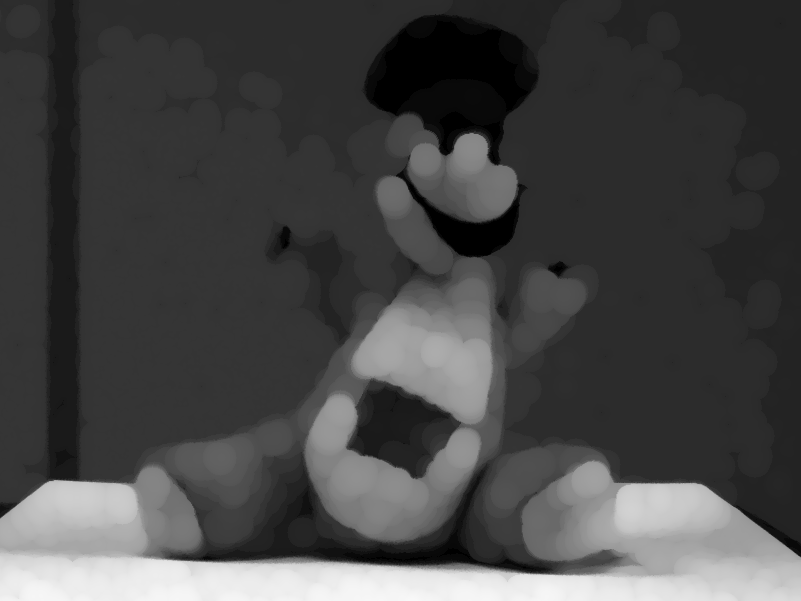}\\
	(c) Closing&
	(d) L-closing
	\end{tabular}
\caption{(a) Opening $(\gamma_b((f^d)^c))^c$ and (b) logarithmic-opening $(\gamma_b^{\protect \LP}((f^d)^c))^c$ on the dark image $f^d$. (c) Closing $(\varphi_b((f^d)^c))^c$ and (d) logarithmic-closing $(\varphi_b^{\protect \LP}((f^d)^c))^c$. The sf $b$ is an hemisphere of radius 15 pixels.}
\label{fig:comp_open_close_vs_Lopen_Lclose_Nessie}
\end{figure}

%
%

\section{Conclusion and perspectives}
\label{sec:concl}

Logarithmic-mathematical morphology is introduced in this paper. The fundamental operators of logarithmic-dilation $\delta_b^{\LP}$ and erosion $\varepsilon_b^{\LP}$ are defined for a structuring function thanks to the LIP-addition law $\LP$. Their expressions are related to the ``classical''  dilation $\delta_b$ and $\varepsilon_b$ facilitating their implementation. As both operators form an adjunction, a logarithmic-opening and closing are defined. The logarithmic-MM is compared to the ``classical'' MM based on an additive structuring function through several examples. Results show that Logarithmic-MM operators are particularly efficient to detect contrast variations in the dark parts (and also in the bright parts) of images caused by different illumination conditions. In future, logarithmic-MM operations will be extended to colour and multivariate images.

%
\bibliographystyle{splncs04}
\bibliography{refs}

\end{document}